\newcommand{\B}{\ensuremath{\mathbf{B}}}
\newcommand{\C}{\ensuremath{\mathbf{C}}}
\newcommand{\LL}{\ensuremath{\mathbf{L}}}
\newcommand{\Y}{\ensuremath{\mathbf{Y}}}
\newcommand{\Z}{\ensuremath{\mathbf{Z}}}
\renewcommand{\c}{\ensuremath{\mathbf{c}}}
\newcommand{\g}{\ensuremath{\mathbf{g}}}
\newcommand{\uu}{\ensuremath{\mathbf{u}}}
\newcommand{\w}{\ensuremath{\mathbf{w}}}
\newcommand{\x}{\ensuremath{\mathbf{x}}}
\newcommand{\y}{\ensuremath{\mathbf{y}}}
\newcommand{\z}{\ensuremath{\mathbf{z}}}
\newcommand{\0}{\ensuremath{\mathbf{0}}}
\newcommand{\1}{\ensuremath{\mathbf{1}}}
\newcommand{\bbeta}{\ensuremath{\boldsymbol{\beta}}}
\newcommand{\bbR}{\ensuremath{\mathbb{R}}}
\newcommand{\calL}{\ensuremath{\mathcal{L}}}
\newcommand{\calO}{\ensuremath{\mathcal{O}}}
\newcommand{\norm}[1]{\left\lVert#1\right\rVert}
\newcommand{\traceop}{\operatorname{tr}}
\newcommand{\trace}[1]{\ensuremath{\traceop\left(#1\right)}}
\theoremstyle{plain}
\newtheorem*{lemma*}{Lemma}
\newtheorem*{prop*}{Proposition}
\theoremstyle{definition}
\newtheorem*{defn*}{Definition}
\newtheorem*{exmp*}{Example}
\newtheorem*{conj*}{Conjecture}
\theoremstyle{remark}
\newtheorem*{rmk*}{Remark}
\newtheorem{theorem}{Theorem}
\title{Projection onto the probability simplex: \\ An efficient algorithm with a simple proof, and an application}
\author{Weiran Wang \hspace{3em} Miguel \'A. Carreira-Perpi\~n\'an \\
  Electrical Engineering and Computer Science, University of California, Merced \\
  \texttt{\{wwang5,mcarreira-perpinan\}@ucmerced.edu}}
\date{September 3, 2013}
\begin{document}

\maketitle

\begin{abstract}
  We provide an elementary proof of a simple, efficient algorithm for computing the Euclidean projection of a point onto the probability simplex. We also show an application in Laplacian $K$-modes clustering.
\end{abstract}

\section{Projection onto the probability simplex}

Consider the problem of computing the Euclidean projection of a point $\y=[y_1,\dots,y_D]^\top \in\bbR^D$ onto the probability simplex, which is defined by the following optimization problem:
\begin{subequations}
  \label{e:problem}
  \begin{align}
    \min_{\x\in\bbR^D} &\quad \frac{1}{2} \norm{\x-\y}^2 \\
    \text{s.t.} &\quad \x^\top \1 = 1 \\
    &\quad \x\ge \0.
  \end{align}
\end{subequations}
This is a quadratic program and the objective function is strictly convex, so there is a unique solution which we denote by $\x=[x_1,\dots,x_D]^\top$ with a slight abuse of notation.

\section{Algorithm}

The following $\calO(D\log{D})$ algorithm finds the solution \x\ to the problem:
\begin{algorithm}[h!]
  \caption{Euclidean projection of a vector onto the probability simplex.}
  \label{alg:proj}
  \renewcommand{\algorithmicrequire}{\textbf{Input:}}
  \renewcommand{\algorithmicensure}{\textbf{Output:}}
  \begin{algorithmic}
    \REQUIRE $\y \in\bbR^D$
    \STATE Sort $\y$ into \uu: $u_1 \ge u_2 \ge \dots \ge u_D$
    \STATE Find $\rho = \max\{1 \le j\le D\mathpunct{:}\ u_j+\frac{1}{j} (1-\sum_{i=1}^j u_i )>0\}$
    \STATE Define $\lambda = \frac{1}{\rho} (1 - \sum_{i=1}^\rho u_i )$
    \ENSURE \x\ s.t.\ $x_i = \max\{y_i+\lambda,0\}$, $i=1,\dots,D$.
  \end{algorithmic}
\end{algorithm}

\noindent
The complexity of the algorithm is dominated by the cost of sorting the components of $\y$. The algorithm is not iterative and identifies the active set exactly after at most $D$ steps. It can be easily implemented (see section~\ref{s:matlab}).

The algorithm has the following geometric interpretation. The solution can be written as $x_i = \max\{y_i+\lambda,0\}$, $i=1,\dots,D$, where $\lambda$ is chosen such that $\sum_{i=1}^D {x_i} = 1$. Place the values $y_1,\dots,y_D$ as points on the X axis. Then the solution is given by a rigid shift of the points such that the points to the right of the Y axis sum to $1$.

The pseudocode above appears in \citet{Duchi_08a}, although earlier papers \citep{Brucker84a,PardalKovoor90a} solved the problem in greater generality\footnote{\url{http://www.cs.berkeley.edu/~jduchi/projects/DuchiShSiCh08.html}}.

\paragraph{Other algorithms}

The problem~\eqref{e:problem} can be solved in many other ways, for example by particularizing QP algorithms such as active-set methods, gradient-projection methods or interior-point methods. It can also be solved by alternating projection onto the two constraints in a finite number of steps \citep{Michel86a}. Another way \citep[Exercise 4.1, solution available at \url{http://see.stanford.edu/materials/lsocoee364b/hw4sol.pdf}]{BoydVanden04a} is to construct a Lagrangian formulation by dualizing the equality constraint and then solve a 1D nonsmooth optimization over the Lagrange multiplier. Algorithm~\ref{alg:proj} has the advantage of being very simple, not iterative, and identifying exactly the active set at the solution after at most $D$ steps (each of cost $O(1)$) after sorting.

\section{A simple proof}

A proof of correctness of Algorithm~\ref{alg:proj} can be found in \citet{ShalevSinger06b} and \citet{ChenYe11a}, but we offer a simpler proof which involves only the KKT theorem.

We apply the standard KKT conditions for the problem \citep{NocedalWright06a}. The Lagrangian of the problem is
\begin{equation*}
  \calL(\x,\lambda,\bbeta) = \frac{1}{2} \norm{\x-\y}^2 - \lambda (\x^\top \1 - 1) - \bbeta^\top \x
\end{equation*}
where $\lambda$ and $\bbeta=[\beta_1,\dots,\beta_D]^\top$ are the Lagrange multipliers for the equality and inequality constraints, respectively. At the optimal solution $\x$ the following KKT conditions hold:
\begin{subequations}
  \label{e:KKT}
  \begin{align}
    x_i - y_i - \lambda - \beta_i&=0, \qquad i=1,\dots,D \\
    x_i &\ge 0, \qquad i=1,\dots,D \\
    \beta_i &\ge 0, \qquad i=1,\dots,D \\
    \label{e:compl}
    x_i \beta_i &= 0 , \qquad i=1,\dots,D \\
    \sum_{i=1}^D x_i &= 1. 
  \end{align}
\end{subequations}
From the complementarity condition~\eqref{e:compl}, it is clear that if $x_i>0$, we must have $\beta_i=0$ and $x_i = y_i + \lambda >0$; if $x_i=0$, we must have $\beta_i\ge 0$ and $x_i = y_i + \lambda + \beta_i=0$, whence $y_i+\lambda=-\beta_i\le 0$. Obviously, the components of the optimal solution \x\ that are zeros correspond to the smaller components of $\y$. Without loss of generality, we assume the components of $\y$ are sorted and $\x$ uses the same ordering , i.e.,
\begin{align*}
  y_1 \ge \dots \ge y_\rho \ge y_{\rho+1} \ge \dots \ge y_D ,\\
  x_1 \ge \dots \ge x_\rho > x_{\rho+1} = \dots = x_D,
\end{align*}
and that $x_1\ge \dots \ge x_{\rho} >0$, $x_{\rho+1}=\dots=x_D=0$. In other words, $\rho$ is the number of positive components in the solution \x. 
Now we apply the last condition and have
\begin{equation*}
  1 = \sum_{i=1}^D x_i = \sum_{i=1}^\rho x_i = \sum_{i=1}^\rho (y_i+\lambda)
\end{equation*}
which gives $\lambda = \frac{1}{\rho}(1- \sum_{i=1}^\rho y_i)$. Hence $\rho$ is the key to the solution. Once we know $\rho$ (there are only $D$ possible values of it), we can compute $\lambda$, and the optimal solution is obtained by just adding $\lambda$ to each component of $\y$ and thresholding as in the end of Algorithm~\ref{alg:proj}. (It is easy to check that this solution indeed satisfies all KKT conditions.) In the algorithm, we carry out the tests for $j=1,\dots,D$ if $t_j = y_j + \frac{1}{j}(1- \sum_{i=1}^j y_i)>0$. We now prove that the number of times this test turns out positive is exactly $\rho$. The following theorem is essentially Lemma 3 of \citet{ShalevSinger06b}.

\begin{theorem} Let $\rho$ be the number of positive components in the solution $\x$, then
  \begin{equation*}
    \rho = \textstyle\max\{ 1 \le j\le D\mathpunct{:}\ y_j+\frac{1}{j} (1 - \sum_{i=1}^j y_i)>0 \}.
  \end{equation*}
\end{theorem}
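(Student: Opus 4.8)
The plan is to analyze the quantities $t_j \bydef y_j + \frac{1}{j}\bigl(1 - \sum_{i=1}^j y_i\bigr)$ that the algorithm tests, and to prove two facts: (i) $t_\rho > 0$, and (ii) $t_j \le 0$ for every $j$ with $\rho < j \le D$. Since the test in the algorithm is precisely ``$t_j>0$'', these two facts say that $j=\rho$ is the largest index at which the test succeeds, which is exactly the claim.

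Fact (i) is immediate from the KKT discussion preceding the theorem. There it was shown that the $\rho$ positive components satisfy $x_i = y_i + \lambda$ with $\lambda = \frac{1}{\rho}\bigl(1-\sum_{i=1}^\rho y_i\bigr)$, so $t_\rho = y_\rho + \lambda = x_\rho$, which is positive by the definition of $\rho$.

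For fact (ii) I would first derive a recurrence relating consecutive tests. Writing $S_j = \sum_{i=1}^j y_i$, so that $1 - S_j = j(t_j - y_j)$, and substituting into the definition of $t_{j+1}$ gives, after a short computation, $t_{j+1} = \frac{j}{j+1}\bigl(t_j + y_{j+1} - y_j\bigr)$. Because $\y$ is sorted in nonincreasing order, $y_{j+1} - y_j \le 0$, hence $t_{j+1} \le \frac{j}{j+1}\,t_j$. Evaluating this at $j=\rho$ and using $t_\rho = y_\rho + \lambda$ gives $t_{\rho+1} = \frac{\rho}{\rho+1}(y_{\rho+1}+\lambda) \le 0$, since $x_{\rho+1}=0$ forces $y_{\rho+1}+\lambda \le 0$ in the KKT analysis. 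A one-line induction then finishes it: if $t_j \le 0$ for some $j$ with $\rho < j < D$, the inequality $t_{j+1}\le \frac{j}{j+1}t_j$ gives $t_{j+1}\le 0$ as well, so $t_j \le 0$ for all $j>\rho$.

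I expect the only real work to be pinning down the recurrence $t_{j+1} = \frac{j}{j+1}(t_j + y_{j+1} - y_j)$ correctly — in particular keeping the index shift straight — after which the argument is pure bookkeeping. As an alternative that sidesteps the recurrence, one can bound $t_j$ directly for $j>\rho$: from $y_i \ge y_j$ for $\rho < i \le j$ one gets $\sum_{i=\rho+1}^j y_i \ge (j-\rho)y_j$, and combining with $\sum_{i=1}^\rho y_i = 1-\rho\lambda$ yields $t_j \le \frac{\rho}{j}(y_j+\lambda) \le 0$; this route needs essentially the same algebra.
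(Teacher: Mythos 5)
Your proposal is correct, and it reaches the conclusion by a somewhat different mechanism than the paper for the indices $j>\rho$. Writing $t_j = y_j+\frac{1}{j}\bigl(1-\sum_{i=1}^j y_i\bigr)$ for the tested quantity, the paper argues all three ranges by direct algebraic decomposition: $t_j=\frac{1}{j}\bigl(j(y_j+\lambda)+\sum_{i=j+1}^{\rho}(y_i+\lambda)\bigr)>0$ for $j\le\rho$, and $t_j=\frac{1}{j}\bigl(\rho(y_j+\lambda)+\sum_{i=\rho+1}^{j}(y_j-y_i)\bigr)\le 0$ for $j>\rho$; note that the ``alternative'' bound $t_j\le\frac{\rho}{j}(y_j+\lambda)$ you sketch at the end is exactly this second decomposition, so that route coincides with the paper's. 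Your main route instead derives the recurrence $t_{j+1}=\frac{j}{j+1}(t_j+y_{j+1}-y_j)$, which is correct, deduces $t_{j+1}\le\frac{j}{j+1}t_j$ from the sorting, and propagates the base case $t_{\rho+1}=\frac{\rho}{\rho+1}(y_{\rho+1}+\lambda)\le 0$ forward by induction, the base case resting on the same KKT facts ($\lambda=\frac{1}{\rho}(1-\sum_{i=1}^{\rho}y_i)$, $x_\rho=y_\rho+\lambda>0$, $y_{\rho+1}+\lambda\le 0$) that the paper sets up before the theorem. You also rightly observe that the paper's case $j<\rho$ is not logically needed for the stated identity: $t_\rho>0$ plus $t_j\le 0$ for all $j>\rho$ already determines the maximum. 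What the paper's version buys is the stronger monotone picture (the test is positive for every $j\le\rho$ and non-positive afterwards), which underlies the remark following the theorem; what your recurrence buys is less case-by-case algebra, and since it shows $t_j\le 0\Rightarrow t_{j+1}\le 0$ while $t_1=1>0$ always, the same prefix structure would in fact fall out of your argument for free if one wanted it.
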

\begin{proof}
  Recall from the KKT conditions~\eqref{e:KKT} that $\lambda\rho = 1- \sum_{i=1}^\rho y_i$, $y_i +\lambda>0$ for $i=1,\dots,\rho$ and $y_i +\lambda \le 0$ for $i=\rho+1,\dots,D$. In the sequel, we show that for $j=1,\dots,D$, the test will continue to be positive until $j=\rho$ and then stay non-positive afterwards, i.e., $y_j+\frac{1}{j} (1 - \sum_{i=1}^j y_i) >0$ for $j\le \rho$, and $y_j+\frac{1}{j} (1 - \sum_{i=1}^j y_i) \le 0$ for $j > \rho$.
  \begin{itemize}
  \item[(i)] For $j=\rho$, we have 
    \begin{equation*}
      y_\rho+\frac{1}{\rho} \bigg(1 - \sum_{i=1}^\rho y_i\bigg) = y_\rho + \lambda = x_\rho >0.
    \end{equation*}
  \item[(ii)] For  $j<\rho$, we have
    \begin{multline*}
      y_j+\frac{1}{j} \bigg(1 - \sum_{i=1}^j y_i\bigg) = \frac{1}{j} \bigg(jy_j + 1- \sum_{i=1}^j y_i \bigg) = \frac{1}{j} \bigg(jy_j + \sum_{i=j+1}^\rho y_i + 1 - \sum_{i=1}^\rho y_i \bigg) = \frac{1}{j} \bigg(jy_j + \sum_{i=j+1}^\rho y_i + \rho\lambda \bigg) \\
      = \frac{1}{j} \bigg(j(y_j+\lambda) + \sum_{i=j+1}^\rho (y_i + \lambda )\bigg).
    \end{multline*}
    Since $y_i + \lambda >0$ for $i=j,\dots,\rho$, we have $y_j+\frac{1}{j} (1 - \sum_{i=1}^j y_i) >0$.
  \item[(iii)] For $j>\rho$, we have
    \begin{multline*}
      y_j+\frac{1}{j} \bigg(1 - \sum_{i=1}^j y_i\bigg) = \frac{1}{j} \bigg(jy_j + 1- \sum_{i=1}^j y_i \bigg) = \frac{1}{j} \bigg(jy_j + 1 - \sum_{i=1}^\rho y_i  - \sum_{i=\rho+1}^j y_i \bigg) = \frac{1}{j} \bigg(jy_j + \rho\lambda - \sum_{i=\rho+1}^j y_i \bigg) \\
      = \frac{1}{j} \bigg(\rho(y_j+\lambda) + \sum_{i=\rho+1}^j (y_j - y_i )\bigg).
    \end{multline*}
    Notice $y_j + \lambda \le 0$ for $j>\rho$, and $y_j\le y_i$ for $j\ge i$ since $\y$ is sorted, therefore $y_j+\frac{1}{j} (1 - \sum_{i=1}^j y_i) <0$.
  \end{itemize}
\end{proof}

\paragraph{Remarks} 

\begin{enumerate}
\item We denote $\lambda_j=\frac{1}{j}(1- \sum_{i=1}^j y_i)$. At the $j$-th test, $\lambda_j$ can be considered as a guess of the true $\lambda$  (indeed, $\lambda_\rho = \lambda$). If we use this guess to compute a tentative solution $\bar{\x}$ where $\bar{x}_i = \max\{y_i+\lambda_j,0\}$, then it is easy to see that $\bar{x}_i>0$ for $i=1,\dots,j$, and $\sum_{i=1}^j \bar{x}_i=1$. In other words, the first $j$ components of $\bar{\x}$ are positive and sum to $1$. If we find $\bar{x}_{j+1}=0$ (or $y_{j+1}+\lambda_j \le 0$), then we know we have found the optimal solution and $j=\rho$ because $\bar{\x}$ satisfies all KKT conditions.
\item To extend the algorithm to a simplex with a different scale, i.e., $\x^\top \1 = a$ for $a > 0$, replace the $1-\sum u_i$ terms with $a-\sum u_i$ in Algorithm~\ref{alg:proj}.
\end{enumerate}

\section{Matlab code}
\label{s:matlab}

The following vectorized Matlab code implements algorithm~\ref{alg:proj}. It projects each row vector in the $N\times D$ matrix \Y\ onto the probability simplex in $D$ dimensions.

\begin{verbatim}
function X = SimplexProj(Y)

[N,D] = size(Y);
X = sort(Y,2,'descend');
Xtmp = (cumsum(X,2)-1)*diag(sparse(1./(1:D)));
X = max(bsxfun(@minus,Y,Xtmp(sub2ind([N,D],(1:N)',sum(X>Xtmp,2)))),0);
\end{verbatim}

\section{An application: Laplacian $K$-modes clustering}
\label{s:appl}

Consider the \emph{Laplacian $K$-modes clustering algorithm} of \citet{WangCarreir13a} (which is an extension of the $K$-modes algorithm of \citealp{CarreirWang13a}). Given a dataset $\x_1,\dots,\x_N \in \bbR^D$ and suitably defined affinity values $w_{nm} \ge 0$ between each pair of points $\x_n$ and $\x_m$ (for example, Gaussian), we define the objective function
\begin{subequations}
  \label{e:Lap-kmodes}
  \begin{align}
    \min_{\Z,\C} &\quad \frac{\lambda}{2}\sum_{m=1}^N \sum_{n=1}^N w_{mn} \norm{\z_m-\z_n}^2 - \sum_{n=1}^N\sum_{k=1}^K z_{nk}  G \biggl( \norm{\frac{\x_n-\c_k}{\sigma}}^2 \biggr) \\
    \text{s.t.} & \quad \sum^K_{k=1}{z_{nk}} = 1, \text{ for } n=1,\dots,N \\
    & \quad z_{nk}\ge 0, \text{ for } n=1,\dots,N,\ k=1,\dots,K.
  \end{align}
\end{subequations}
\Z\ is a matrix of $N\times K$, with $\Z^{\top} = (\z_1,\dots,\z_N)$, where $\z_n = (z_{n1},\dots,z_{nK})^{\top}$ are the soft assignments of point $\x_n$ to clusters $1,\dots,K$, and $\c_1,\dots,\c_K \in \bbR^D$ are modes of the kernel density estimates defined for each cluster (where $G(\cdot^2)$ gives a Gaussian). The problem of projection on the simplex appears in the training problem, i.e., in finding a (local) minimum $(\Z,\C)$ of~\eqref{e:Lap-kmodes}, and in the out-of-sample problem, i.e., in assigning a new point to the clusters.

\paragraph{Training} The optimization of~\eqref{e:Lap-kmodes} is done by alternating minimization over \Z\ and \C. For fixed \C, the problem over \Z\ is a quadratic program of $NK$ variables:
\begin{subequations}
  \label{e:Lap-kmodes-training}
  \begin{align}
    \min_{\Z} & \quad \lambda \trace{\Z^\top \LL \Z}  - \trace{\B^\top \Z} \\
    \text{s.t.} & \quad \Z \1_K = \1_N \\
    & \quad \Z \ge \0,
  \end{align}
\end{subequations}
where $b_{nk} = G(\norm{(\x_n-\c_k)/\sigma}^2)$, $n=1,\dots,N$, $k=1,\dots,K$, and \LL\ is the graph Laplacian computed from the pairwise affinities $w_{mn}$. The problem is convex since \LL\ is positive semidefinite. One simple and quite efficient way to solve it is to use an (accelerated) gradient projection algorithm. The basic gradient projection algorithm iteratively takes a step in the negative gradient from the current point and projects the new point onto the constraint set. In our case, this projection is simple: it separates over each row of \Z\ (i.e., the soft assignment of each data point, $n=1,\dots,N$), and corresponds to a projection on the probability simplex of a $K$-dimensional row vector, i.e., the problem~\eqref{e:problem}.

\paragraph{Out-of-sample mapping} Given a new, test point $\x \in \bbR^D$, we wish to assign it to the clusters found during training. We are given the affinity vectors $\w = (w_n)$ and $\g = (g_k)$, where $w_n$ is the affinity between \x\ and $\x_n$, $n=1,\dots,N$, and $g_k = G(\norm{(\x-\c_k)/\sigma}^2)$, $k=1,\dots,K$, respectively. A natural and efficient way to define an out-of-sample mapping $\z(\x)$ is to solve a problem of the form~\eqref{e:Lap-kmodes} with a dataset consisting of the original training set augmented with \x, but keeping \Z\ and \C\ fixed to the values obtained during training (this avoids having to solve for all points again). Hence, the only free parameter is the assignment vector \z\ for the new point \x. After dropping constant terms, the optimization problem~\eqref{e:Lap-kmodes} reduces to the following quadratic program over $K$ variables:
\begin{subequations}
  \label{e:Lap-kmodes-oos}
  \begin{align}
    \min_{\z} & \quad \frac{1}{2} \norm{\z - (\bar{\z} + \gamma\g)}^2 \\
    \text{s.t.} & \quad \z^\top \1_K = 1 \\
    & \quad \z \ge \0
  \end{align}
\end{subequations}
where $\gamma = 1/2\lambda \sum_{n=1}^N{w_n}$ and
\begin{equation*}
  \bar{\z} = \frac{\Z^{\top} \w}{\w^{\top} \1} = \sum_{n=1}^N{\frac{w_n}{\sum_{n'=1}^N{w_{n'}}} \z_n}
\end{equation*}
is a weighted average of the training points' assignments, and so $\bar{\z} + \gamma\g$ is itself an average between this and the point-to-centroid affinities. Thus, the solution is the projection of the $K$-dimensional vector $\bar{\z}+\gamma\g$ onto the probability simplex, i.e., the problem~\eqref{e:problem} again.

\paragraph{LASS} In general, the optimization problem of eq.~\eqref{e:Lap-kmodes-training} is called \emph{Laplacian assignment model (LASS)} by \citet{CarreirWang13b}. Here, we consider $N$ items and $K$ categories and want to learn a soft assignment $z_{nk}$ of each item to each category, given an item-item graph Laplacian matrix \LL\ of $N \times N$ and an item-category similarity matrix \B\ of $N \times K$. The training is as in eq.~\eqref{e:Lap-kmodes-training} and the out-of-sample mapping as in eq.~\eqref{e:Lap-kmodes-oos}, and the problem of projection on the simplex appears in both cases.


\begin{thebibliography}{11}
\providecommand{\natexlab}[1]{#1}
\providecommand{\url}[1]{\texttt{#1}}
\expandafter\ifx\csname urlstyle\endcsname\relax
  \providecommand{\doi}[1]{doi: #1}\else
  \providecommand{\doi}{doi: \begingroup \urlstyle{rm}\Url}\fi

\bibitem[Boyd and Vandenberghe(2004)]{BoydVanden04a}
S.~Boyd and L.~Vandenberghe.
\newblock \emph{Convex Optimization}.
\newblock Cambridge University Press, Cambridge, U.K., 2004.

\bibitem[Brucker(1984)]{Brucker84a}
P.~Brucker.
\newblock An {$O(n)$} algorithm for quadratic knapsack problems.
\newblock \emph{Operations Research Letters}, 3\penalty0 (3):\penalty0
  163--166, Aug. 1984.

\bibitem[Carreira-Perpi{\~n}{\'a}n and
  Wang(2013{\natexlab{a}})]{CarreirWang13a}
M.~{\'A}. Carreira-Perpi{\~n}{\'a}n and W.~Wang.
\newblock The {$K$}-modes algorithm for clustering.
\newblock Unpublished manuscript, arXiv:1304.6478, Apr.~23 2013{\natexlab{a}}.

\bibitem[Carreira-Perpi{\~n}{\'a}n and
  Wang(2013{\natexlab{b}})]{CarreirWang13b}
M.~{\'A}. Carreira-Perpi{\~n}{\'a}n and W.~Wang.
\newblock A simple assignment model with {Laplacian} smoothing.
\newblock Unpublished manuscript, 2013{\natexlab{b}}.

\bibitem[Chen and Ye(2011)]{ChenYe11a}
Y.~Chen and X.~Ye.
\newblock Projection onto a simplex.
\newblock Unpublished manuscript, arXiv:1101.6081, Feb.~10 2011.

\bibitem[Duchi et~al.(2008)Duchi, Shalev-Shwartz, Singer, and
  Chandra]{Duchi_08a}
J.~Duchi, S.~Shalev-Shwartz, Y.~Singer, and T.~Chandra.
\newblock Efficient projections onto the $\ell_1$-ball for learning in high
  dimensions.
\newblock In A.~{McCallum} and S.~Roweis, editors, \emph{Proc. of the 25th Int.
  Conf. Machine Learning (ICML'08)}, pages 272--279, Helsinki, Finland,
  July~5--9 2008.

\bibitem[Michelot(1986)]{Michel86a}
C.~Michelot.
\newblock A finite algorithm for finding the projection of a point onto the
  canonical simplex of {$\bbR^n$}.
\newblock \emph{J. Optimization Theory and Applications}, 50\penalty0
  (1):\penalty0 195--200, July 1986.

\bibitem[Nocedal and Wright(2006)]{NocedalWright06a}
J.~Nocedal and S.~J. Wright.
\newblock \emph{Numerical Optimization}.
\newblock Springer Series in Operations Research and Financial Engineering.
  Springer-Verlag, New York, second edition, 2006.

\bibitem[Pardalos and Kovoor(1990)]{PardalKovoor90a}
P.~M. Pardalos and N.~Kovoor.
\newblock An algorithm for a singly constrained class of quadratic programs
  subject to upper and lower bounds.
\newblock \emph{Math. Prog.}, 46\penalty0 (1--3):\penalty0 321--328, Jan. 1990.

\bibitem[Shalev-Shwartz and Singer(2006)]{ShalevSinger06b}
S.~Shalev-Shwartz and Y.~Singer.
\newblock Efficient learning of label ranking by soft projections onto
  polyhedra.
\newblock \emph{J. Machine Learning Research}, 7:\penalty0 1567--1599, 2006.

\bibitem[Wang and Carreira-Perpi{\~n}{\'a}n(2013)]{WangCarreir13a}
W.~Wang and M.~{\'A}. Carreira-Perpi{\~n}{\'a}n.
\newblock Laplacian {$K$}-modes clustering.
\newblock Unpublished manuscript, 2013.

\end{thebibliography}

\end{document}